\documentclass[a4paper,11pt]{article}

\usepackage{booktabs}
\usepackage{authblk}
\usepackage[margin=4cm,columnsep=.8cm]{geometry}


\usepackage{algorithm2e}
\usepackage{amsthm}
\usepackage{amsmath}
\usepackage{amssymb}
\usepackage{bm}
\usepackage{comment}
\usepackage{graphicx}
\usepackage{nameref}
\usepackage{slashbox}
\usepackage{soul}
\usepackage{xcolor}
\usepackage{subfigure}

\DeclareMathOperator*{\argmax}{arg\,max}

\newcommand{\event}{\mathcal{E}_T}
\newcommand{\eventc}{\overline{\mathcal{E}}_T}
\newcommand{\myvec}{\boldsymbol}
\newcommand{\hist}{\mathcal{H}_{t-1}}
\newcommand{\E}[1]{\mathbb{E}\left[#1\right]}

\newcommand{\locactsize}{\widetilde{A}}

\providecommand{\newdef}[2]{\newtheorem{#1}{#2}}
\newtheorem{theorem}{Theorem}
\newtheorem{assumption}{Assumption}
\newtheorem{lemma}{Lemma}
\newtheorem{corollary}{Corollary}
\theoremstyle{definition}
\newdef{definition}{Definition}


\begin{document}
\title{\textbf{Multi-Agent Thompson Sampling for Bandit Applications with Sparse Neighbourhood Structures}}

\author[1,2,*]{Timothy Verstraeten}
\author[1]{Eugenio Bargiacchi}
\author[1]{Pieter J.K. Libin}
\author[2]{Jan Helsen}
\author[1,3]{Diederik M. Roijers}
\author[1]{Ann Now\'e}
\affil[1]{Vrije Universiteit Brussel, Artificial Intelligence Lab Brussels, Elsene, 1050, Belgium}
\affil[2]{Vrije Universiteit Brussel, Acoustics and Vibrations Research Group, Elsene, 1050, Belgium}
\affil[3]{HU University of Applied Sciences, Institute for ICT, Utrecht, 3584CS, Netherlands}

\affil[*]{tiverstr@vub.be} 


\setcounter{Maxaffil}{0}
\renewcommand\Affilfont{\itshape\small}
\date{}
\maketitle

\begin{abstract}
Multi-agent coordination is prevalent in many real-world applications. However, such coordination is challenging due to its combinatorial nature. An important observation in this regard is that agents in the real world often only directly affect a limited set of neighbouring agents. Leveraging such loose couplings among agents is key to making coordination in multi-agent systems feasible.
In this work, we focus on \emph{learning} to coordinate.
Specifically, we consider the multi-agent multi-armed bandit framework, in which fully cooperative loosely-coupled agents must learn to coordinate their decisions to optimize a common objective. 
We propose multi-agent Thompson sampling (MATS), a new Bayesian exploration-exploitation algorithm that leverages loose couplings. We provide a regret bound that is sublinear in time and low-order polynomial in the highest number of actions of a single agent for sparse coordination graphs. Additionally, we empirically show that MATS outperforms the state-of-the-art algorithm, MAUCE, on two synthetic benchmarks, and a novel benchmark with Poisson distributions. An example of a loosely-coupled multi-agent system is a wind farm. Coordination within the wind farm is necessary to maximize power production. As upstream wind turbines only affect nearby downstream turbines, we can use MATS to efficiently learn the optimal control mechanism for the farm. To demonstrate the benefits of our method toward applications we apply MATS to a realistic wind farm control task. In this task, wind turbines must coordinate their alignments with respect to the incoming wind vector in order to optimize power production. Our results show that MATS improves significantly upon state-of-the-art coordination methods in terms of performance, demonstrating the value of using MATS in practical applications with sparse neighbourhood structures.
\end{abstract}

\section*{Introduction}

Multi-agent decision coordination is prevalent in many real-world applications, such as traffic light control \cite{wiering2000multi}, warehouse commissioning \cite{claes2017decentralised} and wind farm control \cite{gebraad2015maximum}. 
Often, such settings can be formulated as coordination problems in which agents have to cooperate in order to optimize a shared team reward \cite{Boutilier1996mmdp}.

Handling multi-agent settings is challenging, as the size of the joint action space scales exponentially with the number of agents in the system. Therefore, an approach that directly considers all agents' actions jointly is computationally intractable. This has made such coordination problems the central focus in the planning literature \cite{PIFMDP,VIFMDP,guestrin2002multiagent,guestrin2002context}.  Fortunately, in real-world settings agents often only directly affect a limited set of neighbouring agents. This means that the global reward received by all agents can be decomposed into local components that only depend on small subsets of agents. Exploiting such loose couplings is key in order to keep multi-agent decision problems tractable \cite{chapman2013convergent}.  

In this work, we consider learning to coordinate in multi-agent systems. For example, consider a wind farm control task, which is comprised of a set of wind turbines, and we aim to maximize the farm's total productivity. When upstream turbines directly face the incoming wind stream, energy is extracted from wind. This reduces the productivity of downstream turbines, potentially damaging the overall power production. However, turbines have the option to rotate, in order to deflect the turbulent flow away from turbines downwind \cite{vandijk2016}. Due to the complex nature of the aerodynamic interactions between the turbines, constructing a model of the environment and deriving a control policy using planning techniques is extremely challenging \cite{marden2013model}. Instead, a joint control policy among the turbines can be \emph{learned} to effectively maximize the productivity of the wind farm. The system is loosely coupled, as redirection only directly affects adjacent turbines.

While most of the literature only considers approximate reinforcement learning methods for learning in multi-agent systems, it has recently been shown \cite{bargiacchi2018learning} that it is possible to achieve theoretical bounds on the regret (i.e., how much reward is lost due to learning). In this work, we use the multi-agent multi-armed bandit problem definition, and improve upon the state of the art. 
Specifically, we propose the multi-agent Thompson sampling (MATS) algorithm, which exploits loosely-coupled interactions in multi-agent systems. The loose couplings are formalized as a \emph{coordination graph}, which defines for each pair of agents whether their actions depend on each other. We assume the graph structure is known beforehand, which is the case in many real-world applications with sparse agent interactions (e.g., wind farm control).

Our method leverages the exploration-exploitation mechanism of Thompson sampling (TS). TS has been shown to be highly competitive to other popular methods, e.g., UCB \cite{chapelle2011empirical}. Recently, theoretical guarantees on its regret have been established \cite{agrawal2012analysis}, which renders the method increasingly popular in the literature. Additionally, due to its Bayesian nature, problem-specific priors can be specified. We argue that this has strong relevance in many practical fields, such as advertisement selection \cite{chapelle2011empirical} and influenza mitigation \cite{libin2018bayesian, bfts2019}. 

We provide a finite-time Bayesian regret analysis and prove that the upper regret bound of MATS is low-order polynomial in the number of actions of a single agent for sparse coordination graphs (Corollary~\ref{cor:regret}). This is a significant improvement over the exponential bound of classic TS, which is obtained when the coordination graph is ignored \cite{agrawal2012analysis}. We show that MATS improves upon the state of the art in various synthetic settings. Finally, we demonstrate that MATS achieves high performance on a realistic wind farm control task, in which multiple wind turbines have to be jointly aligned to maximize the total power production.


\section*{Problem statement}
\label{sec:problem}

In this work, we adopt the multi-agent multi-armed bandit (MAMAB) setting \cite{bargiacchi2018learning,stranders2012dcops}. A MAMAB is similar to the multi-armed bandit formalism \cite{thompson1933likelihood}, but considers multiple agents factored into groups. When the agents have pulled a joint arm, each group receives a reward. The goal shared by all agents is to maximize the total sum of rewards. Formally,
\begin{definition}
A multi-agent multi-armed bandit (MAMAB) is a tuple $\langle \mathcal{D}, \mathcal{A}, f\rangle$ where
\begin{itemize}
\item $\mathcal{D}$ is the set of $m$ enumerated agents. This set is factorized into $\rho$, possibly overlapping, subsets of agents $\mathcal{D}^e$.
\item $\mathcal{A} = \mathcal{A}_1 \times \dots \times \mathcal{A} _{m}$ is the set of joint actions, or joint arms, which is the Cartesian product of the sets of actions $\mathcal{A}_i$ for each of the $m$ agents in $\mathcal{D}$. We denote $\mathcal{A}^e$ as the set of local joint actions, or local arms, for the group $\mathcal{D}^e$.
\item $f(\myvec{a})$ is a stochastic function providing a global reward when a joint arm, $\myvec{a} \in \mathcal{A}$, is pulled. The global reward function is decomposed into $\rho$ noisy, observable and independent local reward functions, i.e., $f(\myvec{a}) = \sum^\rho_{e=1} f^e(\myvec{a}^e)$. A local function $f^e$ only depends on the local arm $\myvec{a}^e$ of the subset of agents in $\mathcal{D}^e$.
\end{itemize}
We denote the mean reward of a joint arm as $\mu(\myvec{a}) = \sum^\rho_{e=1} \mu^e(\myvec{a}^e)$. For simplicity, we refer to the $i^\text{th}$ agent by its index $i$.
\label{def:mamab}
\end{definition}
The dependencies between the local reward functions and the agents are described as a coordination graph \cite{guestrin2002multiagent}.
\begin{definition}
A coordination graph is a bipartite graph $G = \langle \mathcal{D}, \{f^e\}^\rho_{e=1}, E\rangle$, whose nodes $\mathcal{D}$ are agents and components of a factored reward function $f = \sum^\rho_{e=1} f^e$, and an edge $(i, f^e) \in E$ exists if and only if agent $i$ influences component $f^e$.
\end{definition}
\noindent
The dependencies in a MAMAB can be described by setting $E = \{(i, f^e)\ |\ i \in \mathcal{D}^e\}$.

In this setting, the objective is to minimize the expected cumulative regret \cite{agrawal2013further}, which is the cost incurred when pulling a particular joint arm instead of the optimal one.
\begin{definition}
The expected cumulative regret of pulling a sequence of joint arms until time step $T$ according to policy $\pi$ is
\begin{equation}
\begin{split}
\mathbb{E}\left[R(T, \pi)\right] &\triangleq \mathbb{E}\left[\sum^T_{t=1} \Delta(\myvec{a}_t)\ \middle|\ \pi\right]
\end{split}
\end{equation}
with
\begin{equation}
\begin{split}
\Delta(\myvec{a}_t) &\triangleq \mu(\myvec{a}_*) - \mu(\myvec{a}_t)\\
&= \sum^\rho_{e=1} \mu^e(\myvec{a}^e_*) - \mu^e(\myvec{a}^e_t),
\end{split}
\end{equation}
where $\myvec{a}_*$ is the optimal joint arm and $\myvec{a}_t$ is the joint arm pulled at time $t$. For the sake of brevity, we will omit $\pi$ when the context is clear.
\end{definition}

Cumulative regret can be minimized by using a policy that considers the full joint arm space, thereby ignoring loose couplings between agents. This leads to a combinatorial problem, as the joint arm space scales exponentially with the number of agents. Therefore, loose couplings need to be taken into account whenever possible.

\section*{Multi-agent Thompson sampling}
\label{sec:algo}

We propose the multi-agent Thompson sampling (MATS) algorithm for decision making in loosely-coupled multi-agent multi-armed bandit problems.
Consider a MAMAB with groups $\mathcal{D}^e$ (Definition~\ref{def:mamab}). The local means $\mu^e(\myvec{a}^e)$ are treated as unknown. According to the Bayesian formalism, we exert our beliefs over the local means $\mu^e(\myvec{a}^e)$ in the form of a prior, $Q^e_{\myvec{a}^e}(\cdot)$. At each time step $t$, MATS draws a sample $\mu^e_t(\myvec{a}^e)$ from the posterior for each group and local arm given the history, $\mathcal{H}_{t-1}$, consisting of local actions and rewards associated with past pulls:
\begin{equation}
\begin{split}
\mu^e_t(\myvec{a}^e) &\sim Q^e_{\myvec{a}^e}(\cdot\ |\ \mathcal{H}^e_{t-1})\\
 \mathcal{H}^e_{t-1} &\triangleq \{(\myvec{a}^e_i, f^e_i(\myvec{a}^e_i))\}^{t-1}_{i=1}.\\
\end{split}
\end{equation}
Note that during this step, MATS samples directly the posterior over the unknown local means, which implies that the sample $\mu^e_t(\myvec{a}^e)$ and the unknown mean $\mu^e(\myvec{a}^e)$ are independent and identically distributed at time step $t$.

Thompson sampling (TS) chooses the arm with the highest sample, i.e.,
\begin{equation}
\myvec{a}_t = \argmax_{\myvec{a}} \mu_t(\myvec{a}).
\end{equation}
However, in our case, the expected reward is decomposed into several local means. As conflicts between overlapping groups will arise, the optimal local arms for an agent in two groups may differ.
Therefore, we must define the argmax-operator to deal with the factored representation of a MAMAB, while still returning the full joint arm that maximizes the sum of samples, i.e.,
\begin{equation}
\myvec{a}_t = \argmax_{\myvec{a}} \sum^\rho_{e=1} \mu^e_t(\myvec{a}^e).
\label{eq:max_arm}
\end{equation}
To this end, we use variable elimination (VE), which computes the joint arm that maximizes the global reward without explicitly enumerating over the full joint arm space \cite{guestrin2002multiagent}. Specifically, VE consecutively eliminates an agent from the coordination graph, while computing its best response with respect to its neighbours. 
VE is guaranteed to return the optimal joint arm and has a computational complexity that is combinatorial in terms of the induced width of the graph, i.e., the number of neighbours of an agent at the time of its elimination. However, as the method is typically applied to a loosely-coupled coordination graph, the induced width is generally much smaller than the size of the full joint action space, which renders the maximization problem tractable \cite{guestrin2002multiagent,guestrin2002context}. Approximate efficient alternatives exist, such as max-plus \cite{vlassis2004anytime}, but using them will invalidate the proof for the Bayesian regret bound (Theorem~\ref{theorem:regret}).

Finally, the joint arm that maximizes Equation~\ref{eq:max_arm}, $\myvec{a}_t$, is pulled and a reward $f^e_t(\myvec{a}^e_t)$ will be obtained for each group. MATS is formally described in Algorithm~\ref{algo:mats}.
\begin{algorithm}
\KwData{Prior $Q^e_{\myvec{a}^e}$ per group $\mathcal{D}^e$ and local action $\myvec{a}^e$}
$\mathcal{H}_0 \leftarrow \{\}$\\
\For{$t \in [1..T]$}{
	$\forall e \in \left[1..\rho\right], \myvec{a}^e \in \mathcal{A}^e:$\\
	\qquad $\mu^e_t(\myvec{a}^e) \sim Q^e_{\myvec{a}^e}(\ \cdot\ |\ \mathcal{H}_{t-1})$\\
	$\myvec{a}_t \leftarrow \argmax_{\myvec{a}} \sum^\rho_{e=1} \mu^e_t(\myvec{a}^e)$ using VE\\
	$\langle f^e_t(\myvec{a}^e_t)\rangle^\rho_{e=1} \leftarrow $ Pull joint arm $\myvec{a}_t$\\
	$\mathcal{H}_t \leftarrow \mathcal{H}_{t-1} \cup \left\{\langle\myvec{a}^e_t, f^e_t(\myvec{a}^e_t)\rangle^\rho_{e=1}\right\}$\\
}
\caption{MATS}
\label{algo:mats}
\end{algorithm}

MATS belongs to the class of probability matching methods \cite{lattimore2018bandit}. 
\begin{definition} Given history $\mathcal{H}_{t-1}$, the probability distribution of the pulled arm $\myvec{a}_t$ is equal to the probability distribution of the optimal arm $\myvec{a}_*$. Formally,
\begin{equation}
P(\myvec{a}_t = \cdot\ |\ \mathcal{H}_{t-1}) = P(\myvec{a}_* = \cdot\ |\ \mathcal{H}_{t-1}).
\label{eq:prob_match}
\end{equation}
\label{def:prob_match}
\end{definition}
\noindent
Intuitively, MATS samples the local mean rewards according to the beliefs of the user at each time step, and maximizes over those means to find the optimal joint arm according to Definition~\ref{def:mamab}. This process is conceptually similar to traditional TS \cite{thompson1933likelihood}.

\section*{Bayesian regret analysis}
\label{sec:regret}

Many multi-agent systems are composed of locally connected agents. When formalized as a MAMAB (Definition~\ref{def:mamab}), our method is able to exploit these local structures during the decision process. We provide a regret bound for MATS that scales sublinearly with a factor $\locactsize T$, where $\locactsize$ is the number of local arms.

Consider a MAMAB $\langle\mathcal{D}, \mathcal{A}, f\rangle$ with $\rho$ groups and the following assumption on the rewards:
\begin{assumption}
The global rewards have a mean between 0 and 1, i.e.,$$\mu(\myvec{a}) \in [0, 1], \forall \myvec{a} \in \mathcal{A}.$$
\label{assumption:means}
\end{assumption}
\begin{assumption}
The local rewards shifted by their mean are $\sigma$-subgaussian distributed, i.e., $\forall e \in [1..\rho], \myvec{a}^e \in \mathcal{A}^e$,
$$\E{\exp\left(t(f^e(\myvec{a}^e) - \mu^e(\myvec{a}^e))\right)} \le \exp(0.5 \sigma^2 t^2).$$
\label{assumption:rewards}
\end{assumption}
We maintain the pull counters $n^e_{t-1}(a^e)$ and estimated means $\hat{\mu}^e_{t-1}(a^e)$ for local arms $a^e$.

Consider the event $\event$, which states that, until time step $T$, the differences between the local sample means and true means are bounded by a time-dependent threshold, i.e.,
\begin{equation}
\begin{split}
\event &\triangleq \left(\forall e, \myvec{a}^e, t\ :\ |\hat{\mu}^e_{t-1}(\myvec{a}^e) - \mu^e(\myvec{a}^e)| \le c^e_{t}(\myvec{a}^e)\right)\\
\end{split}
\label{eq:event}
\end{equation}
with
\begin{equation}
\begin{split}
c^e_t(\myvec{a}^e) &\triangleq \sqrt{\frac{2\sigma^2\log(\delta^{-1})}{n^e_{t-1}(\myvec{a}^e)}}.
\end{split}
\label{eq:threshold}
\end{equation}
where $\delta$ is a free parameter that will be chosen later. We denote the complement of the event by $\eventc$.
\begin{lemma}
\emph{(Concentration inequality)} The probability of exceeding the error bound on the local sample means is linearly bounded by $\tilde{A}T \delta$. Specifically,
\begin{equation}
P(\eventc) \le 2\tilde{A}T \delta.
\end{equation}
\label{lemma:conc_ineq}
\end{lemma}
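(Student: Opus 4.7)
The plan is to bound $P(\eventc)$ by combining a union bound with the standard Hoeffding inequality for $\sigma$-subgaussian averages. The first key observation is that for each group $e$ and local arm $\myvec{a}^e$, both the empirical mean $\hat{\mu}^e_{t-1}(\myvec{a}^e)$ and the threshold $c^e_t(\myvec{a}^e)$ depend on $t$ only through the pull counter $n^e_{t-1}(\myvec{a}^e)$. Re-indexing the rewards collected whenever local arm $\myvec{a}^e$ is pulled as an i.i.d.\ sequence $X^e_{\myvec{a}^e,1}, X^e_{\myvec{a}^e,2},\dots$, and writing $\bar{X}^e_{\myvec{a}^e,k} = \frac{1}{k}\sum_{j=1}^{k} X^e_{\myvec{a}^e,j}$, whenever $n^e_{t-1}(\myvec{a}^e)=k$ we have $\hat{\mu}^e_{t-1}(\myvec{a}^e)=\bar{X}^e_{\myvec{a}^e,k}$ and $c^e_t(\myvec{a}^e)=\sqrt{2\sigma^2\log(\delta^{-1})/k}$. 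Since $n^e_{t-1}(\myvec{a}^e)\le T$ for every $t\le T$, the event $\eventc$ is contained in the union
\begin{equation*}
\bigcup_{e=1}^{\rho}\ \bigcup_{\myvec{a}^e\in\mathcal{A}^e}\ \bigcup_{k=1}^{T}\left\{\left|\bar{X}^e_{\myvec{a}^e,k}-\mu^e(\myvec{a}^e)\right|>\sqrt{\frac{2\sigma^2\log(\delta^{-1})}{k}}\right\}.
\end{equation*}

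Next, by Assumption~\ref{assumption:rewards} the centred rewards $X^e_{\myvec{a}^e,j}-\mu^e(\myvec{a}^e)$ are i.i.d.\ $\sigma$-subgaussian, so a standard two-sided Hoeffding bound gives $P(|\bar{X}^e_{\myvec{a}^e,k}-\mu^e(\myvec{a}^e)|>\epsilon)\le 2\exp(-k\epsilon^2/(2\sigma^2))$ for every deterministic $k\ge 1$. Substituting $\epsilon=\sqrt{2\sigma^2\log(\delta^{-1})/k}$ bounds each term in the union by $2\delta$. Applying the union bound over the $\sum_{e=1}^{\rho}|\mathcal{A}^e|=\tilde{A}$ choices of $(e,\myvec{a}^e)$ and the $T$ values of $k$ yields $P(\eventc)\le 2\tilde{A}T\delta$.

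The only subtle point is that the pull counter $n^e_{t-1}(\myvec{a}^e)$ is itself random and depends on the adaptive policy, so Hoeffding's inequality cannot be applied directly to $\hat{\mu}^e_{t-1}(\myvec{a}^e)$ with $n^e_{t-1}(\myvec{a}^e)$ treated as deterministic. The peeling step above resolves this by upper-bounding $\eventc$ by a union over all possible values $k\in\{1,\dots,T\}$ of the counter, which decouples the random index from the concentration argument; the only price paid is the extra factor of $T$ that already appears in the target bound. A self-normalised martingale inequality could tighten this by removing the union over $k$, but is not needed for the stated result.
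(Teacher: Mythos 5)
Your proposal is correct, and at the top level it follows the same strategy as the paper: a union bound over all $\locactsize$ local arms and $T$ indices, followed by a two-sided Hoeffding bound for $\sigma$-subgaussian averages that contributes $2\delta$ per term, giving $2\locactsize T\delta$. The difference lies in what the union runs over. The paper unions over time steps $t$ and then applies Hoeffding's inequality directly to $\hat{\mu}^e_{t-1}(\myvec{a}^e)$ with the pull counter $n^e_{t-1}(\myvec{a}^e)$ treated as the sample size, even though that counter is a random quantity determined by the adaptive MATS policy; this is the standard informal shortcut in bandit analyses. You instead union over the possible values $k\in\{1,\dots,T\}$ of the counter and apply Hoeffding only to fixed-sample-size averages $\bar{X}^e_{\myvec{a}^e,k}$, which is exactly the peeling device needed to make that step airtight, at no cost since the factor $T$ is already present in the bound. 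So your argument is not just equivalent in outcome; it explicitly repairs the one point where the paper's proof is loose, and your closing remark that a self-normalised inequality could remove the union over $k$ (saving only the already-paid factor $T$) is accurate. One cosmetic point: the event $\event$ as stated also ranges over arms with $n^e_{t-1}(\myvec{a}^e)=0$, for which $c^e_t(\myvec{a}^e)$ is infinite and the deviation condition is vacuous; it is worth one sentence to note that such pairs contribute nothing to the union, which your restriction to $k\ge 1$ implicitly assumes.
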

\begin{proof}
Using the union bound (U), we can bound the probability of observing event $\eventc$ as
\begin{equation}
\begin{split}
P(\eventc) &\stackrel{(\ref{eq:event})}{=} P\left(\exists t, e, \myvec{a}^e\ :\ |\hat{\mu}^e_{t-1}(\myvec{a}^e) - \mu^e(\myvec{a}^e)| > c^e_t(\myvec{a}^e)\right)\\
&\stackrel{(\text{U})}{\le} \sum^T_{t=1} \sum^\rho_{e=1} \sum_{\myvec{a}^e \in \mathcal{A}^e}P\left(\left|\hat{\mu}^e_{t-1}(\myvec{a}^e) - \mu^e(\myvec{a}^e)\right| > c^e_t(\myvec{a}^e)\right).\\
\end{split}
\end{equation}
The estimated mean $\hat{\mu}^e_{t-1}(\myvec{a}^e)$ is a weighted sum of $n^e_{t-1}(\myvec{a}^e)$ random variables distributed according to a $\sigma$-subgaussian with mean $\mu^e(\myvec{a}^e)$. Hence, Hoeffding's inequality (H) is applicable \cite{vershynin2018high}.
\begin{equation}
\begin{split}
P\left(\left|\hat{\mu}^e_{t-1}(\myvec{a}^e) - \mu^e(\myvec{a}^e)\right| > c^e_t(\myvec{a}^e)\ \middle|\ \mu^e(\myvec{a}^e)\right)
&\stackrel{(\text{H})}{\le} 2\exp\left(-\frac{n^e_{t-1}(\myvec{a}^e)}{2\sigma^2}(c^e_t(\myvec{a}^e))^2\right)\\
&\stackrel{(\ref{eq:threshold})}{=} 2\exp\left(-\frac{n^e_{t-1}(\myvec{a}^e)}{2\sigma^2}\frac{2\sigma^2\log(\delta^{-1})}{n^e_{t-1}(\myvec{a}^e)}\right)\\
&= 2\exp\left(-\log(\delta^{-1})\right).\\
&= 2\delta\\
\end{split}
\end{equation}
Therefore, the following concentration inequality on $\eventc$ holds:
\begin{equation}
\begin{split}
P(\eventc) &\le \sum^T_{t=1} \sum^\rho_{e=1} \sum_{\myvec{a}^e \in \mathcal{A}^e} 2 \delta = 2\locactsize T \delta.\\
\end{split}
\end{equation}
\end{proof}
\begin{lemma}\emph{(Bayesian regret bound under $\event$)} Provided that the error bound on the local sample means is never exceeded until time $T$, the Bayesian regret bound, when using the MATS policy $\pi$, is of the order
\begin{equation}
\begin{split}
\mathbb{E}\left[R(T, \pi)\ \middle|\ \event\right] &\le \sqrt{32 \sigma^2 \locactsize \rho T \log(\delta^{-1})}.
\end{split}
\end{equation}
\label{lemma:regret_event}
\end{lemma}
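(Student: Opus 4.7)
The plan is to adapt the Russo--Van Roy Bayesian regret decomposition to the factored reward structure of the MAMAB, handling each group of the coordination graph separately. First, I would introduce a per-group upper confidence quantity $U^e_t(\myvec{a}^e) \triangleq \est^e_{t-1}(\myvec{a}^e) + c^e_t(\myvec{a}^e)$, which is $\hist$-measurable and, under $\event$, satisfies pointwise $\mu^e(\myvec{a}^e) \le U^e_t(\myvec{a}^e) \le \mu^e(\myvec{a}^e) + 2 c^e_t(\myvec{a}^e)$ for every $(e, \myvec{a}^e)$.

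Next, I would telescope the per-step regret as
\begin{equation*}
\Delta(\myvec{a}_t) = \sum_{e=1}^{\rho}\bigl[\mu^e(\myvec{a}^e_*) - U^e_t(\myvec{a}^e_*)\bigr] + \sum_{e=1}^{\rho}\bigl[U^e_t(\myvec{a}^e_*) - U^e_t(\myvec{a}^e_t)\bigr] + \sum_{e=1}^{\rho}\bigl[U^e_t(\myvec{a}^e_t) - \mu^e(\myvec{a}^e_t)\bigr],
\end{equation*}
and observe that under $\event$ the first bracket is non-positive while the third is pointwise bounded by $2 c^e_t(\myvec{a}^e_t)$. The middle bracket vanishes in expectation through probability matching (Definition~\ref{def:prob_match}): since $U^e_t$ is $\hist$-measurable and, conditional on $\hist$, the local marginals of $\myvec{a}_*$ and $\myvec{a}_t$ coincide, the identity $\mathbb{E}\!\left[U^e_t(\myvec{a}^e_*) - U^e_t(\myvec{a}^e_t)\,\middle|\,\hist\right]=0$ holds for every $e$. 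Chaining these observations through the tower property leaves
\begin{equation*}
\mathbb{E}\!\left[R(T,\pi)\,\middle|\,\event\right] \le 2 \sum_{t=1}^{T}\sum_{e=1}^{\rho} \mathbb{E}\!\left[c^e_t(\myvec{a}^e_t)\,\middle|\,\event\right].
\end{equation*}

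Finally, I would control the right-hand side by a standard pull-counting argument: for each $(e, \myvec{a}^e)$, ordering the pulls by rank $k = 1, \dots, N^e_T(\myvec{a}^e)$ gives $\sum_{t : \myvec{a}^e_t = \myvec{a}^e} 1/\sqrt{n^e_{t-1}(\myvec{a}^e)} \le 2\sqrt{N^e_T(\myvec{a}^e)}$; summing over $(e, \myvec{a}^e)$, Cauchy--Schwarz together with the budget identity $\sum_{e,\myvec{a}^e} N^e_T(\myvec{a}^e) = \rho T$ yields $\sum_{e,\myvec{a}^e}\sqrt{N^e_T(\myvec{a}^e)} \le \sqrt{\locactsize \rho T}$, and substituting the explicit form $c^e_t(\myvec{a}^e) = \sqrt{2\sigma^2\log(\delta^{-1})/n^e_{t-1}(\myvec{a}^e)}$ collapses everything to $4\sqrt{2\sigma^2 \locactsize \rho T\log(\delta^{-1})} = \sqrt{32\sigma^2 \locactsize \rho T\log(\delta^{-1})}$.

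The main obstacle I anticipate is the mild tension between probability matching, which is most cleanly an \emph{unconditional} tower identity, and the $\event$-conditioning required by the statement: strictly speaking, conditioning on $\event$ can perturb the middle-bracket cancellation. I would handle this either by instead bounding $\mathbb{E}\!\left[R(T,\pi)\,\mathbf{1}_{\event}\right]$ and letting the enclosing theorem absorb the discrepancy using the $\mathcal{O}(\locactsize T \delta)$ slack already supplied by Lemma~\ref{lemma:conc_ineq}, or by arguing directly that the induced distortion is of that same low order. The remaining steps are routine, relying only on Cauchy--Schwarz and the standard estimate $\sum_{k=1}^{N} k^{-1/2} \le 2\sqrt{N}$.
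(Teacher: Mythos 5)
Your proof follows essentially the same route as the paper's: the same upper-confidence quantity (your per-group $U^e_t$ summed over $e$ is exactly the paper's $u_t$), the same probability-matching swap of $\myvec{a}_*$ for $\myvec{a}_t$, the same non-positive/$2c^e_t$ split under $\event$, and the same pull-counting plus Cauchy--Schwarz finish with the identical constant $\sqrt{32\sigma^2 \locactsize \rho T \log(\delta^{-1})}$. The conditioning-on-$\event$ subtlety you flag is genuine but is glossed over in the paper's own proof as well (it applies probability matching inside an expectation conditioned on $\hist$ and $\event$, even though $\event$ is not $\hist$-measurable), and your proposed remedy---bounding $\mathbb{E}\left[R(T,\pi)\,\mathbf{1}_{\event}\right]$ and letting the enclosing theorem absorb the discrepancy via the $2\locactsize T\delta$ slack of Lemma~\ref{lemma:conc_ineq}---is a sound way to make it rigorous.
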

\begin{proof}
Consider this upper bound on the sample means:
\begin{equation}
\begin{split}
u_t(\myvec{a}) &\triangleq \sum^\rho_{e=1} \hat{\mu}^e_{t-1}(\myvec{a}^e) + c^e_t(\myvec{a}^e).
\label{eq:u_t}
\end{split}
\end{equation}
Given history $\mathcal{H}_{t-1}$, the statistics $\hat{\mu}^e_{t-1}(\myvec{a}^e)$ and $n^e_{t-1}(\myvec{a}^e)$ are known, rendering $u_t(\cdot)$ a deterministic function. Therefore, the probability matching property of MATS (Equation~\ref{eq:prob_match}) can be applied as follows:
\begin{equation}
\begin{split}
&\mathbb{E}\left[u_t(\myvec{a}_t)\ |\ \hist\right] = \mathbb{E}\left[u_t(\myvec{a}_*)\ |\ \hist\right].\\
\label{eq:prob_match_ut}
\end{split}
\end{equation}
Hence, using the tower-rule (T), the regret can be bounded as
\begin{equation}
\begin{split}
\mathbb{E}\left[\sum^T_{t=1} \Delta(\myvec{a}_t)\ |\ \event\right] &\stackrel{(\text{T})}{=} \mathbb{E}\left[\sum^T_{t=1} \mathbb{E}\left[\mu(\myvec{a}_*) - \mu(\myvec{a}_t)\ |\ \hist, \event\right]\right]\\
&= \mathbb{E}\left[\sum^T_{t=1} \mathbb{E}\left[\mu(\myvec{a}_*) - u_t(\myvec{a}_t)\ |\ \hist, \event\right]\right.\\
&\qquad + \left.\sum^T_{t=1}\mathbb{E}\left[u_t(\myvec{a}_t) - \mu(\myvec{a}_t)\ |\ \hist, \event\right]\right]\\
&\stackrel{(\ref{eq:prob_match_ut})}{=} \mathbb{E}\left[\sum^T_{t=1} \mathbb{E}\left[\mu(\myvec{a}_*) - u_t(\myvec{a}_*)\ |\ \hist, \event\right]\right.\\
&\qquad + \left.\sum^T_{t=1}\mathbb{E}\left[u_t(\myvec{a}_t) - \mu(\myvec{a}_t)\ |\ \hist, \event\right]\right].\\
\label{eq:bound_regret_t}
\end{split}
\end{equation}
Note that the expression $\mu(\myvec{a}_*) - u_t(\myvec{a}_*)$ is always negative under $\event$, i.e.,
\begin{equation}
\begin{split}
\mu(\myvec{a}_*) - u_t(\myvec{a}_*) &\stackrel{(\ref{eq:u_t})}{=} \sum^{\rho}_{e=1} \mu^e(\myvec{a}^e_*) - \hat{\mu}^e_{t-1}(\myvec{a}^e_*) - c^e_t(\myvec{a}^e_*)\\
&\stackrel{(\ref{eq:event})}{\le} \sum^{\rho}_{e=1} c^e_t(\myvec{a}^e_*) - c^e_t(\myvec{a}^e_*) = 0,\\
\end{split}
\end{equation}
while $u_t(\myvec{a}_t) - \mu(\myvec{a}_t)$ is bounded by twice the threshold $c^e_t(\myvec{a}^e)$, i.e.,
\begin{equation}
\begin{split}
u_t(\myvec{a}_t) - \mu(\myvec{a}_t)  &\stackrel{(\ref{eq:u_t})}{=} \sum^{\rho}_{e=1} \hat{\mu}^e_{t-1}(\myvec{a}^e_t) + c^e_t(\myvec{a}^e_t) - \mu^e(\myvec{a}^e_t) \\
&\stackrel{(\ref{eq:event})}{\le} \sum^{\rho}_{e=1} c^e_t(\myvec{a}^e_t) + c^e_t(\myvec{a}^e_t) = 2 \sum^{\rho}_{e=1} c^e_t(\myvec{a}^e_t).\\
\end{split}
\end{equation}
Thus, Equation~\ref{eq:bound_regret_t} can be bounded as
\begin{equation}
\begin{split}
\mathbb{E}\left[\sum^T_{t=1} \Delta(\myvec{a}_t)\ |\ \event\right]
&\le 2\sum^T_{t=1} c^e_t(\myvec{a}^e_t)\\
&\le 2\sum^T_{t=1} \sqrt{\frac{2\sigma^2\log(\delta^{-1})}{n^e_{t-1}(\myvec{a}^e_t)}}\\
&= 2 \sum_{\myvec{a}^e \in \mathcal{A}^e} \sum^T_{t=1} \mathcal{I}\{\myvec{a}^e_t = \myvec{a}^e\}\sqrt{\frac{2\sigma^2\log(\delta^{-1})}{n^e_{t-1}(\myvec{a}^e)}},
\label{eq:bound_regret_ct}
\end{split}
\end{equation}
where $\mathcal{I}\{\cdot\}$ is the indicator function.
The terms in the summation are only non-zero at the time steps when the local action $\myvec{a}^e$ is pulled, i.e., when $\mathcal{I}\{\myvec{a}^e_t = \myvec{a}^e\} = 1$. Additionally, note that only at these time steps, the counter $n^e_t(\myvec{a}^e)$ increases by exactly 1. Therefore, the following equality holds:
\begin{equation}
\begin{split}
\sum^T_{t=1} \mathcal{I}\{\myvec{a}^e_t = \myvec{a}^e\} \sqrt{(n^e_{t-1}(\myvec{a}^e))^{-1}} = \sum^{n^e_{T}(\myvec{a}^e)}_{k=1} \sqrt{k^{-1}}.\\
\label{eq:equality_counts}
\end{split}
\end{equation}
The function $\sqrt{k^{-1}}$ is decreasing and integrable. Hence, using the right Riemann sum,
\begin{equation}
\sqrt{k^{-1}} \le \int^{k}_{k - 1} \sqrt{x^{-1}} dx.
\label{eq:integrable_sqrt}
\end{equation}
Combining Equations~\ref{eq:bound_regret_ct}-\ref{eq:integrable_sqrt} leads to a bound
\begin{equation}
\begin{split}
\mathbb{E}\left[\sum^T_{t=1} \Delta(\myvec{a}_t)\ \middle|\ \event\right] &\stackrel{(\ref{eq:bound_regret_ct})}{=} 2 \sum_{\myvec{a}^e \in \mathcal{A}^e} \sum^T_{t=1} \mathcal{I}\{\myvec{a}^e_t = \myvec{a}^e\}\sqrt{\frac{2\sigma^2\log(\delta^{-1})}{n^e_{t-1}(\myvec{a}^e)}}\\
&\stackrel{(\ref{eq:equality_counts})}{=} \sqrt{8\sigma^2\log(\delta^{-1})} \sum_{\myvec{a}^e \in \mathcal{A}^e} \sum^{n^e_{T}(\myvec{a}^e)}_{k=1} \sqrt{k^{-1}}\\
&\stackrel{(\ref{eq:integrable_sqrt})}{\le} \sqrt{8\sigma^2\log(\delta^{-1})} \sum_{\myvec{a}^e \in \mathcal{A}^e} \int^{n^e_{T}(\myvec{a}^e)}_{0} \sqrt{x^{-1}} dx\\
&= \sqrt{8\sigma^2\log(\delta^{-1})} \sum_{\myvec{a}^e \in \mathcal{A}^e} \sqrt{4 n^e_{T}(\myvec{a}^e)}.\\
\label{eq:bound_ct_sum_full}
\end{split}
\end{equation}
We use the relationship $||\mathbf{x}||_1 \le \sqrt{n}||\mathbf{x}||_2$ between the 1- and 2-norm of a vector $\mathbf{x}$, where $n$ is the number of elements in the vector, as follows:
\begin{equation}
\begin{split}
\sum^\rho_{e=1} \sum_{\myvec{a}^e \in \mathcal{A}^e} \left|\sqrt{n^e_T(\mathbf{a}^e)}\right| \le \sqrt{\locactsize}\sqrt{\sum^\rho_{e=1} \sum_{\myvec{a}^e \in \mathcal{A}^e} \left(\sqrt{n^e_T(\mathbf{a}^e)}\right)^2}.
\label{eq:norm_inequality}
\end{split}
\end{equation}
Finally, note that the sum of all counts $n^e_T(\mathbf{a}^e)$ is equal to the total number of local pulls done by MATS until time $T$, i.e.,
\begin{equation}
\begin{split}
\sum^{\rho}_{e=1} \sum_{\myvec{a}^e \in \mathcal{A}^e} n^e_{T}(\myvec{a}^e) = \rho T.
\label{eq:total_count}
\end{split}
\end{equation}
Using the Equations~\ref{eq:bound_ct_sum_full}-\ref{eq:total_count}, the complete regret bound under $\event$ is given by
\begin{equation}
\begin{split}
\mathbb{E}\left[\sum^T_{t=1} \Delta(\myvec{a}_t)\ |\ \event\right] &\stackrel{(\ref{eq:bound_ct_sum_full})}{\le} \sqrt{8\sigma^2\log(\delta^{-1})} \sum_{\myvec{a}^e \in \mathcal{A}^e} \sqrt{4 n^e_{T}(\myvec{a}^e)}\\
&\stackrel{(\ref{eq:norm_inequality})}{\le}
\sqrt{32\sigma^2\log(\delta^{-1})} \sqrt{\locactsize}\sqrt{\sum^\rho_{e=1} \sum_{\myvec{a}^e \in \mathcal{A}^e} \left(\sqrt{n^e_T(\mathbf{a}^e)}\right)^2}\\
&\stackrel{(\ref{eq:total_count})}{=} \sqrt{32\sigma^2\log(\delta^{-1})} \sqrt{\locactsize}\sqrt{\rho T}.\\
\end{split}
\end{equation}
\end{proof}

\begin{theorem}
Let $\langle\mathcal{D}, \mathcal{A}, \mathcal{F}\rangle$ be a MAMAB. If Assumptions~\ref{assumption:means} and~\ref{assumption:rewards} hold, then the MATS policy $\pi$ satisfies a Bayesian regret bound of
\begin{equation}
\begin{split}
\E{R(T, \pi)} &\le \sqrt{64\sigma^2 \locactsize \rho T \log(\locactsize T)} + \frac{2}{\locactsize}\\
&\in O\left(\sqrt{\sigma^2 \locactsize \rho T \log(\locactsize T)}\right).
\end{split}
\end{equation}
\label{theorem:regret}
\end{theorem}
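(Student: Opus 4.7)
The plan is to combine Lemma~\ref{lemma:conc_ineq} and Lemma~\ref{lemma:regret_event} via a standard conditioning argument, then optimize the free parameter $\delta$ so that the two contributions balance out and produce the $\sqrt{\log(\locactsize T)}$ factor in the theorem.

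First I would decompose the unconditional Bayesian regret by conditioning on the favourable concentration event $\event$ and its complement:
\begin{equation*}
\E{R(T,\pi)} = \E{R(T,\pi)\md \event}P(\event) + \E{R(T,\pi)\md \eventc}P(\eventc) \le \E{R(T,\pi)\md \event} + T\cdot P(\eventc),
\end{equation*}
where I use $P(\event) \le 1$ on the first term, and for the second term I invoke Assumption~\ref{assumption:means}: since $\mu(\myvec{a}) \in [0,1]$, the instantaneous regret $\Delta(\myvec{a}_t)$ is at most $1$, so the cumulative regret is trivially bounded by $T$. Plugging in Lemma~\ref{lemma:regret_event} for the conditional regret and Lemma~\ref{lemma:conc_ineq} for $P(\eventc)$ gives
\begin{equation*}
\E{R(T,\pi)} \le \sqrt{32\sigma^2 \locactsize \rho T \log(\delta^{-1})} + 2\locactsize T^2 \delta.
\end{equation*}

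The remaining step is to pick $\delta$. I would choose $\delta = (\locactsize T)^{-2}/\locactsize \cdot \locactsize = 1/(\locactsize^2 T^2)$, more precisely $\delta = 1/(\locactsize^2 T^2)$, so that the residual term $2\locactsize T^2\delta$ collapses to $2/\locactsize$, matching the additive constant in the theorem. Under this choice, $\log(\delta^{-1}) = 2\log(\locactsize T)$, and the first term becomes $\sqrt{64\sigma^2 \locactsize \rho T \log(\locactsize T)}$, exactly as stated. The asymptotic statement follows immediately by absorbing the $2/\locactsize$ constant into the $O(\cdot)$ notation.

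There is no real obstacle here beyond correctly tuning $\delta$: the heavy lifting (the probability matching identity, Hoeffding's inequality, and the Riemann-sum bound on $\sum \sqrt{1/k}$) is already encapsulated in the two preceding lemmas. The only subtlety worth double-checking is that the bound $\Delta(\myvec{a}_t) \le 1$ under $\eventc$ uses Assumption~\ref{assumption:means} on the \emph{global} mean rather than on the local means, which is why the crude bound $T$ on the conditional regret suffices and does not reintroduce any exponential factor in the number of agents.
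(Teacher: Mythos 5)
Your proposal is correct and follows essentially the same route as the paper: the same decomposition over $\event$ and $\eventc$ with $\Delta(\myvec{a}_t)\le 1$ from Assumption~\ref{assumption:means}, the same application of Lemmas~\ref{lemma:conc_ineq} and~\ref{lemma:regret_event}, and the same choice $\delta = (\locactsize T)^{-2}$ yielding the $2/\locactsize$ term and the $\sqrt{64\sigma^2\locactsize\rho T\log(\locactsize T)}$ bound.
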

\begin{proof}
Using the law of excluded middle (M) and the fact that $\Delta(\myvec{a}_t)$ and $P(\event\ |\ \hist)$ are between 0 and 1 (B), the regret can be decomposed as
\begin{equation}
\begin{split}
\mathbb{E}\left[\sum^T_{t=1} \Delta(\myvec{a}_t)\right] &\stackrel{(\text{M})}{=} \mathbb{E}\left[\sum^T_{t=1} \Delta(\myvec{a}_t)\ |\ \event\right]P(\event) + \mathbb{E}\left[\sum^T_{t=1} \Delta(\myvec{a}_t)\ |\ \eventc\right]P(\eventc)\\
&\stackrel{(\text{B})}{\le} \E{\sum^T_{t=1} \Delta(\myvec{a}_t)\ |\ \event} + T P(\eventc).\\
\end{split}
\label{eq:decomp_bound}
\end{equation}
Then, according to Lemmas~\ref{lemma:conc_ineq} and~\ref{lemma:regret_event} (L), we have
\begin{equation}
\begin{split}
\E{\sum^T_{t=1} \Delta(\myvec{a}_t)} &\stackrel{(\ref{eq:decomp_bound})}{\le} \E{\sum^T_{t=1} \Delta(\myvec{a}_t)\ |\ \event} + T P(\eventc)\\
&\stackrel{(\text{L})}{\le} \sqrt{32 \sigma^2 \locactsize \rho T \log(\delta^{-1})} + 2 \locactsize T^2 \delta.\\
\end{split}
\label{eq:combine_lemmas}
\end{equation}
Finally, choosing $\delta = (\locactsize T)^{-2}$, we conclude that
\begin{equation}
\begin{split}
\E{R(T, \pi)} &\stackrel{(\ref{eq:combine_lemmas})}{\le} \sqrt{32 \sigma^2 \locactsize \rho T \log(\delta^{-1})} + 2 \locactsize T^2 \delta\\
 &\le \sqrt{64 \sigma^2 \locactsize \rho T \log\left(\locactsize T\right)} + \frac{2}{\locactsize}\\
&\in O\left(\sqrt{\sigma^2 \locactsize \rho T\log(\locactsize T)}\right).\\
\end{split}
\end{equation}
\end{proof}

\begin{corollary} If $|\mathcal{A}_i| \le k$ for all agents $i$, and if $|\mathcal{D}^e| \le d$ for all groups $\mathcal{D}^e$, then
\begin{equation}
\begin{split}
\E{R(T, \pi)} \in O\left(\rho\sqrt{\sigma^2k^d T \log(\rho k^d T)}\right).
\end{split}
\end{equation}
 \label{cor:regret}
\end{corollary}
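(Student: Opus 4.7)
The plan is to derive the corollary directly from Theorem~\ref{theorem:regret} by bounding the quantity $\locactsize$ in terms of the structural parameters $k$, $d$, and $\rho$. Recall that $\locactsize$, as it appears in Lemma~\ref{lemma:conc_ineq} and in the statement of Theorem~\ref{theorem:regret}, equals $\sum_{e=1}^{\rho}|\mathcal{A}^e|$, the total number of local arms summed across all groups. Once this substitution is made, the result follows by routine manipulation of the big-$O$ expression.

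First, I would bound the size of each local joint action space. Since $\mathcal{A}^e = \prod_{i \in \mathcal{D}^e} \mathcal{A}_i$, the assumptions $|\mathcal{A}_i| \le k$ and $|\mathcal{D}^e| \le d$ give
\[
|\mathcal{A}^e| \;=\; \prod_{i \in \mathcal{D}^e} |\mathcal{A}_i| \;\le\; k^{|\mathcal{D}^e|} \;\le\; k^d.
\]
Summing across the $\rho$ groups then yields $\locactsize = \sum_{e=1}^{\rho} |\mathcal{A}^e| \le \rho k^d$.

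Next, I would plug this bound into the conclusion of Theorem~\ref{theorem:regret}, which states that $\E{R(T,\pi)} \in O\bigl(\sqrt{\sigma^2 \locactsize \rho T \log(\locactsize T)}\bigr)$. Replacing $\locactsize$ by its upper bound $\rho k^d$ gives
\[
\E{R(T, \pi)} \;\in\; O\!\left(\sqrt{\sigma^2 \cdot \rho k^d \cdot \rho T \cdot \log(\rho k^d T)}\right) \;=\; O\!\left(\rho\sqrt{\sigma^2 k^d T \log(\rho k^d T)}\right),
\]
where the factor $\rho$ is pulled out of the square root via $\sqrt{\rho^2} = \rho$.

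I do not expect any serious obstacle here: the only subtlety is being careful to identify $\locactsize$ with the sum $\sum_e |\mathcal{A}^e|$ (rather than, say, $\rho \cdot \max_e |\mathcal{A}^e|$ or the global joint action count $|\mathcal{A}|$), so that the product $\locactsize \cdot \rho$ in the theorem's bound becomes $\rho^2 k^d$ and the outer $\rho$ emerges cleanly after taking the square root. The logarithmic factor $\log(\locactsize T)$ likewise inherits the bound $\log(\rho k^d T)$ monotonically, which is absorbed without issue inside the big-$O$.
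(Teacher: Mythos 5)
Your proposal is correct and matches the paper's argument: the paper's entire proof is the bound $\locactsize = \sum_{e=1}^\rho |\mathcal{A}^e| = \sum_{e=1}^\rho \prod_{i \in \mathcal{D}^e} |\mathcal{A}_i| \le \rho k^d$, with the substitution into Theorem~\ref{theorem:regret} left implicit. Your explicit handling of the $\rho^2$ factor under the square root and the monotonicity of the logarithm is exactly the routine step the paper omits.
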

\begin{proof}
$\locactsize = \sum_{e=1}^\rho |\mathcal{A}^e| = \sum_{e=1}^\rho \prod_{i \in \mathcal{D}^e} |\mathcal{A}_i| \le \rho k^d$.
\end{proof}

Corollary~\ref{cor:regret} tells us that the regret is sub-linear in terms of time $T$ and low-order polynomial in terms of the largest action space of a single agent when the number of groups and agents per group are small. This reflects the main contribution of this work. When agents are loosely coupled, the \emph{effective} joint arm space is significantly reduced, and MATS provides a mechanism that efficiently deals with such settings. This is a significant improvement over the established classic regret bounds of vanilla TS when the MAMAB is `flattened' and the factored structure is neglected \cite{russo2014learning,lattimore2018bandit}. The classic bounds scale exponentially with the number of agents, which renders the use of vanilla TS unfeasible in many multi-agent environments.

\section*{Experiments}
\label{sec:experiments}

We evaluate the performance of MATS on the benchmark problems proposed in the paper that introduced MAUCE \cite{bargiacchi2018learning}, which is the current state-of-the-art algorithm for multi-agent bandit problems, and one novel setting that falls outside the domain of the theoretical guarantees for both MAUCE and MATS.
First, we evaluate the performance of MATS on two benchmarks that were introduced in the MAUCE paper, i.e., Bernoulli 0101-Chain and Gem Mining. We compare against a random policy (rnd), Sparse Cooperative Q-Learning (SCQL) \cite{kok2004scql} and the state-of-the-art algorithm, MAUCE \cite{bargiacchi2018learning}. For SCQL and MAUCE, we use the same exploration parameters as in previous work \cite{bargiacchi2018learning}. For MATS, we always use non-informative Jeffreys priors, which are invariant toward reparametrization of the experimental settings \cite{robert2007bayesian}. Although including additional prior domain knowledge could be useful in practice, we use well-known non-informative priors in our experiments to compare fairly with other state-of-the-art techniques. 
Then, we introduce a novel variant of the 0101-Chain with Poisson-distributed local rewards. A Poisson distribution is supergaussian, meaning that its tails tend slower towards zero than the tails of any Gaussian. Therefore, both the assumptions made in Theorem~\ref{theorem:regret} and in the established regret bound of MAUCE are violated. Additionally, as the rewards are highly skewed, we expect that the use of symmetric exploration bounds in MAUCE will often lead to either over- or underexploration of the local arms. We assess the performance of both methods on this benchmark.



\subsection*{Bernoulli 0101-Chain}
\label{sec:exp_chains}


The Bernoulli 0101-Chain consists of $n$ agents and $n-1$ local reward distributions. Each agent can choose between two actions: 0 and 1. In the coordination graph, agents $i$ and $i+1$ are connected to a local reward $f^i(a_i, a_{i+1})$. Thus, each pair of agents should locally coordinate in order to find the best joint arm. The local rewards are drawn from a Bernoulli distribution with a different success probability per group. These success probabilities are given in Table~\ref{tab:ber_chain}. The optimal joint action is an alternating sequence of zeros and ones, starting with 0.

\begin{table}[!ht]
\centering
\begin{tabular}{| l | c | c |}
  \hline \textbf{$f^i \sim \mathcal{B}$} & $a_{i+1} = 0$ & $a_{i+1} = 1$\\
  \hline $a_{i} = 0$ & $0.75$ & $1$\\
  \hline $a_{i} = 1$ & $0.25$ & $0.9$\\ 
  \hline
\end{tabular}
\vspace{0.3cm}
\caption{
Bernouilli 0101 Chain -- The unscaled local reward distributions of agents $i$ and $i+1$, where $i$ is even. Each entry shows the success probability for each local arm of agents $i$ and $i+1$, where $i$ is even. The table is transposed for the case where $i$ is odd.}
\label{tab:ber_chain}
\end{table}

To ensure that the assumptions made in the regret analyses of MAUCE and MATS hold, we divide the local rewards by the number of groups, such that the global rewards are between 0 and 1. 
\begin{figure*}[ht!]
\centering
\subfigure[Bernoulli 0101-Chain]{\label{fig:nodes}\includegraphics[width=0.49\textwidth]{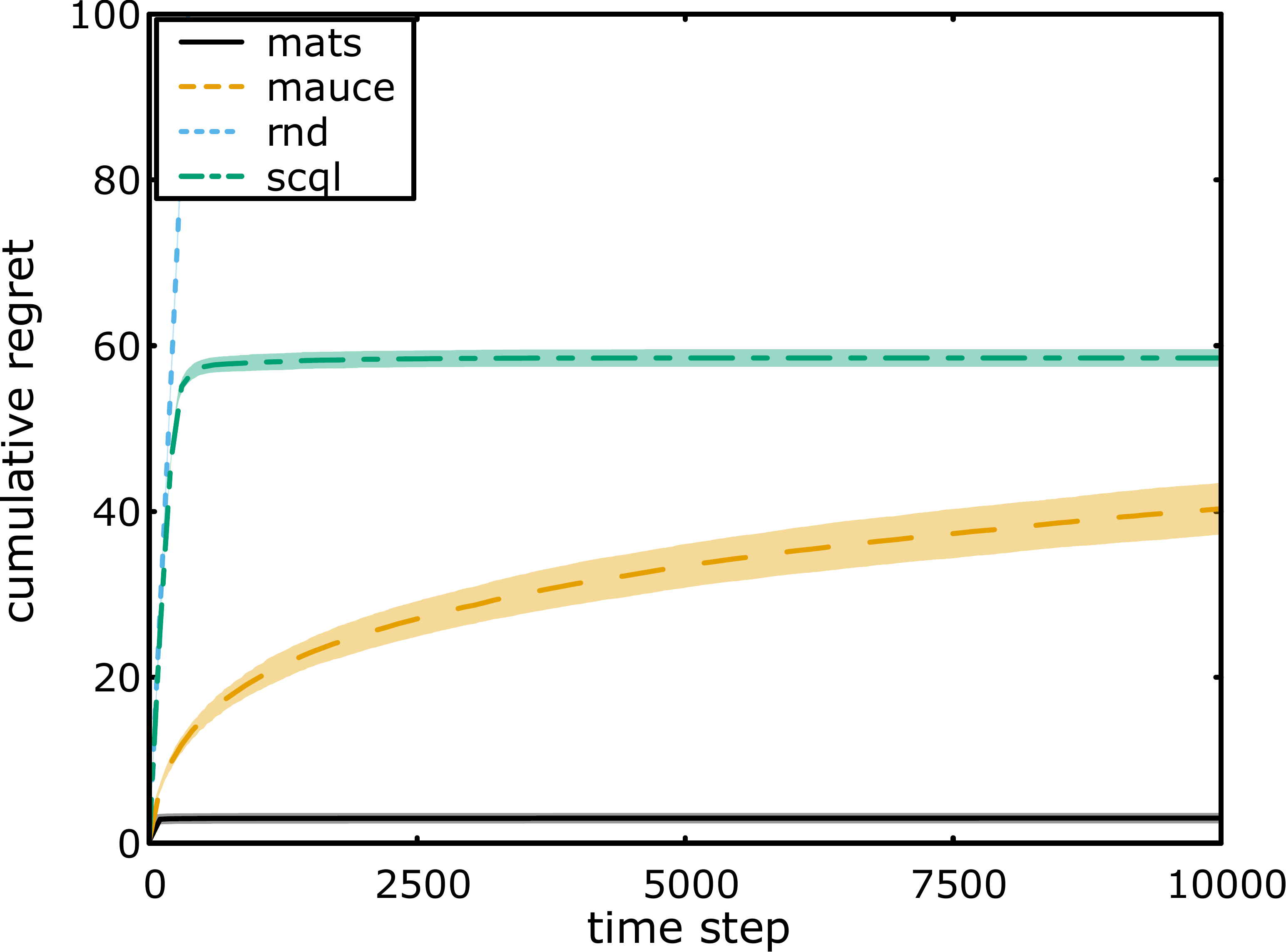}}
\subfigure[Gem Mining]{\label{fig:mines}\includegraphics[width=0.49\textwidth]{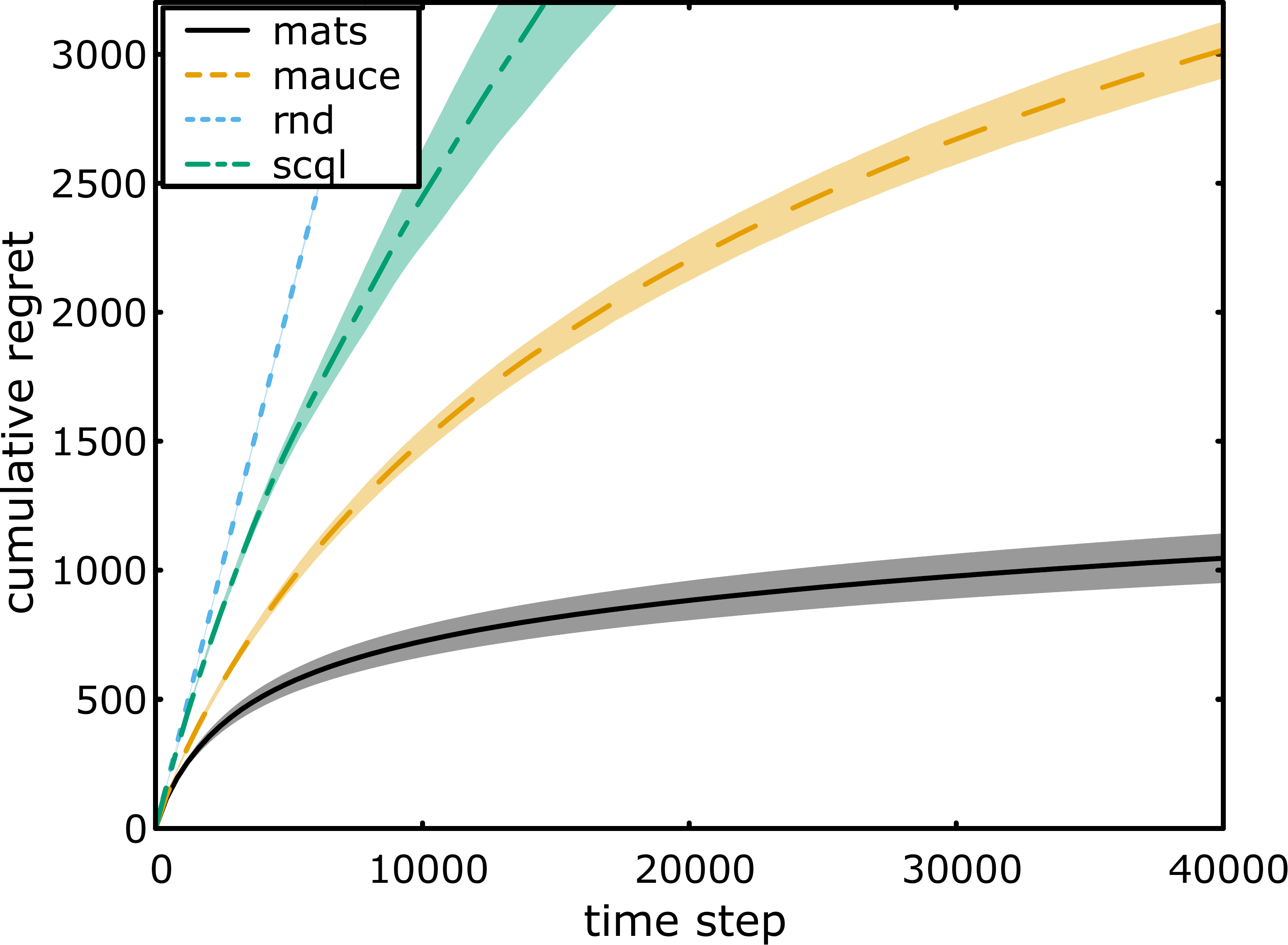}}\\
\subfigure[Poisson 0101-Chain]{\label{fig:nodesp}\includegraphics[width=0.49\textwidth]{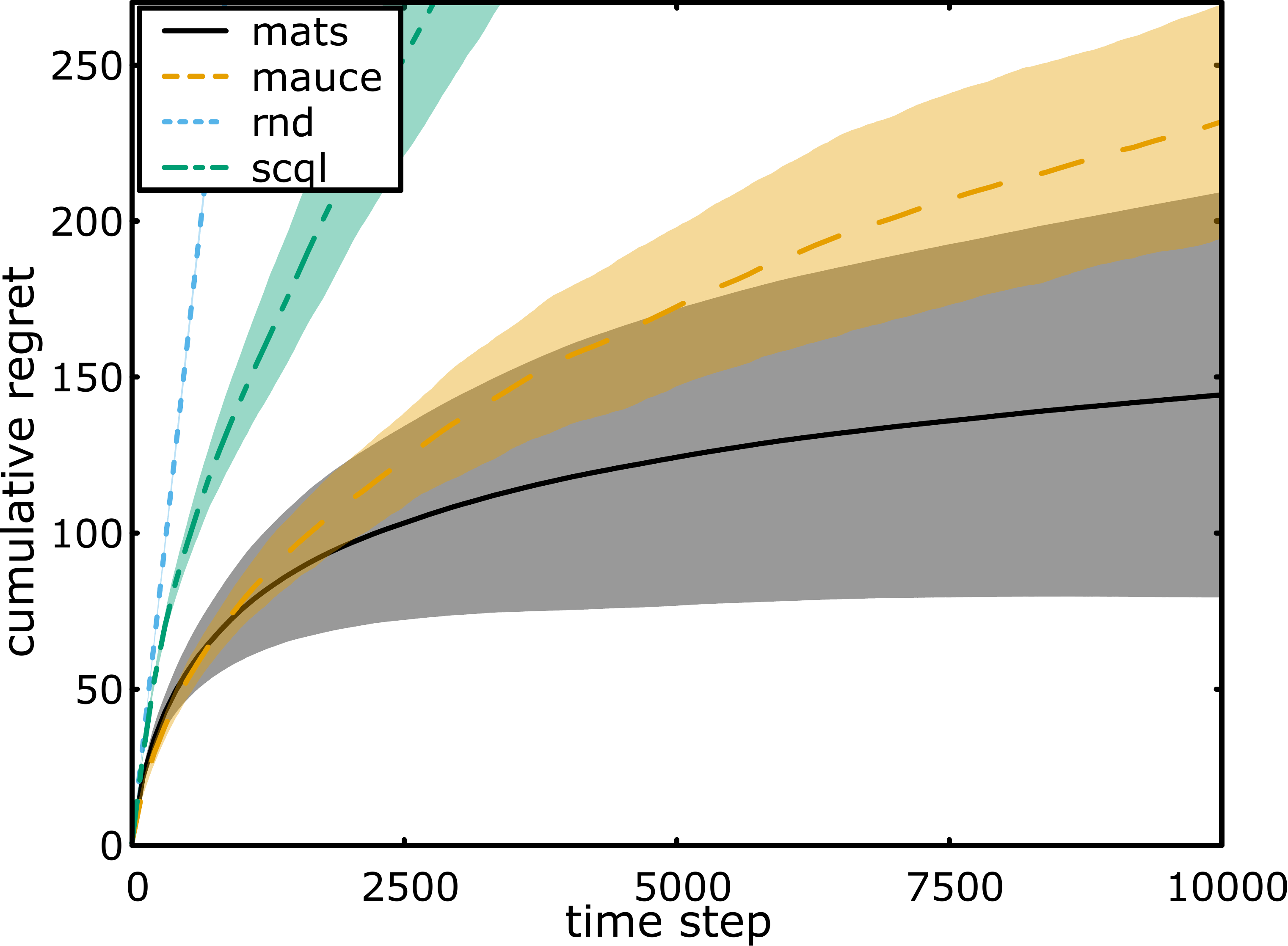}}
\caption{Cumulative normalized regret averaged over 100 runs for the (a) Bernoulli 0101-Chain, (b) Gem Mining and (d) Poisson 0101-Chain, and over 10 runs for the (c) Wind Farm. Both the mean (line) and standard deviation (shaded area) are plotted.}
\label{fig:exp_results}
\end{figure*}

We provide non-informative Jeffreys priors on the unknown means to MATS, which for the Bernoulli likelihood is a Beta prior, $\mathcal{B}(\alpha=0.5, \beta=0.5)$ \cite{lunn2012bugs}.
The results for the Bernoulli 0101-chains are shown in Figure~\ref{fig:nodes}.

\subsection*{Gem Mining}

\begin{figure}[!ht]
\centering
\includegraphics[width=0.7\textwidth]{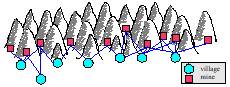}
\caption{Example of a coordination graph in the Gem Mining problem. The red nodes are the mines (rewards), while the blue nodes are the villages (agents).}
\label{fig:mines_graph}
\end{figure}

In the Gem Mining problem, a mining company wants to excavate a set of mines for gems (i.e., local rewards). The goal is to maximize the total number of gems found over all mines. However, the company's workers live in separate villages (i.e., agents), and only one van per village is available. Therefore, each village needs to decide to which mine it should send its workers (i.e., local action). Moreover, workers can only commute to nearby mines (i.e., coordination graph). Hence, a group can be constructed per mine, consisting of all agents that can travel toward the mine. An example of a coordination graph is given in Figure~\ref{fig:mines_graph}

The reward is drawn from a Bernoulli distribution, where the probability of finding a gem at a mine is $1.03^{w-1} p$ with $w$ the number of workers at the mine and $p$ a base probability that is sampled uniformly random from the interval $[0, 0.5]$ for each mine.
When more workers are excavating a mine, the probability of finding a gem increases. Each village is populated by a number sampled uniformly random from $[1..5]$. The coordination graph is generated by sampling for each village $i$ a number of mines $m_i$ in $[2..4]$ to which it should be connected. Then, each village $i$ is connected to the mines $i$ to $(i+m_i-1)$. The last village is always connected to 4 mines.

We provide non-informative Jeffreys priors on the unknown means to MATS, which for the Bernoulli likelihood is a Beta prior, $\mathcal{B}(\alpha=0.5, \beta=0.5)$ \cite{lunn2012bugs}.
The results for the Gem Mining problem are shown in Figure~\ref{fig:mines}.

\subsection*{Poisson 0101-Chain}

We introduce a novel benchmark with Poisson distributed local rewards, for which the established regret bounds of MATS and MAUCE do not hold.
Similar to the Bernoulli 0101-Chain, agents need to coordinate their actions in order to obtain an alternating sequence of zeroes and ones. However, as the rewards are highly skewed and supergaussian, this setting is much more challenging. The means of the Poisson distributions are given in Table~\ref{tab:poisson_chain}. We also divide the rewards by the number of groups, similar to the Bernoulli 0101-Chain.

\begin{table}[!ht]
\centering
\begin{tabular}{| l | c | c |}
  \hline \textbf{$f^i \sim \mathcal{P}$} & $a_{i+1} = 0$ & $a_{i+1} = 1$\\
  \hline $a_{i} = 0$ & $0.1$ & $0.3$\\
  \hline $a_{i} = 1$ & $0.2$ & $0.1$\\ 
  \hline
\end{tabular}
\vspace{0.3cm}
\caption{
Poisson 0101 Chain -- The unscaled local reward distributions of agents $i$ and $i+1$. Each entry shows the mean for each local arm of agents $i$ and $i+1$.}
\label{tab:poisson_chain}
\end{table}

For MAUCE, an exploration parameter must be chosen. This exploration parameter denotes the range of the observed rewards. As a Poisson distribution has unbounded support, we rely on percentiles of the reward distribution. Specifically, as 95$\%$ of the rewards when pulling the optimal arm falls below $1$, we choose $1$ as the exploration parameter of MAUCE.
For MATS we use non-informative Jeffreys priors on the unknown means, which for the Poisson likelihood is a Gamma prior, $\mathcal{G}(\alpha=0.5, \beta=0)$ \cite{lunn2012bugs}.
The results are shown in Figure~\ref{fig:nodesp}.

\section*{Wind farm control application}
\label{sec:experiments_wind}
We demonstrate the benefits of MATS on a state-of-the-art wind farm simulator and compare its performance to MAUCE and SCQL.
A wind farm consists of a group of wind turbines, instantiated to extract energy from wind. From the perspective of a single turbine, aligning with the incoming wind vector usually ensures the highest productivity. However, translating this control policy directly towards an entire wind farm may be sub-optimal. As wind passes through the farm, downstream turbines observe a significantly lower wind speed. This is known as the \emph{wake effect}, which is due to the turbulence generated behind operational turbines.

In recent work, the possibility of deflecting wake away from the farm through rotor misalignment is investigated \cite{vandijk2016}. While a misaligned turbine produces less energy on its own, the group's total productivity is increased.
Physically, the wake effect reduces over long distances, and thus, turbines tend to only influence their neighbours. We can use this domain knowledge to define groups of agents and organize them in a graph structure. Note that the graph structure depends on the incoming wind vector. Nevertheless, atmospheric conditions are typically discretized when analyzing operational regimes \cite{iso_standard_design_2012}, thus, a graph structure can be made independently for each possible incoming discretized wind vector. We construct a graph structure for one possible wind vector.

\begin{figure}[!ht]
\centering
\subfigure[Dependency graph]{\label{fig:wind_graph}\includegraphics[width=0.49\textwidth, trim=3cm 0cm 7cm 1.5cm, clip]{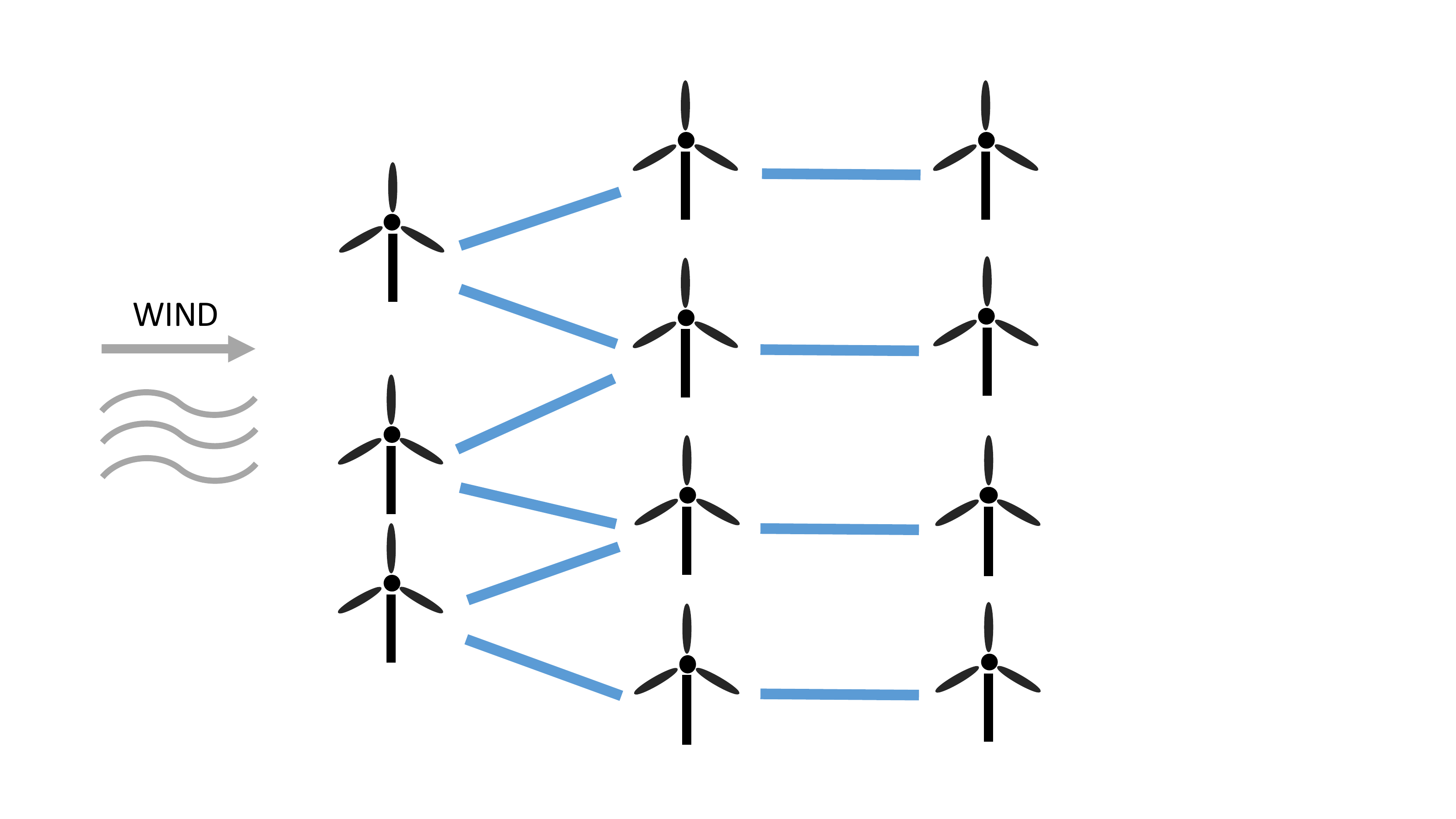}}
\subfigure[Wind Farm]{\label{fig:wind}\includegraphics[width=0.49\textwidth]{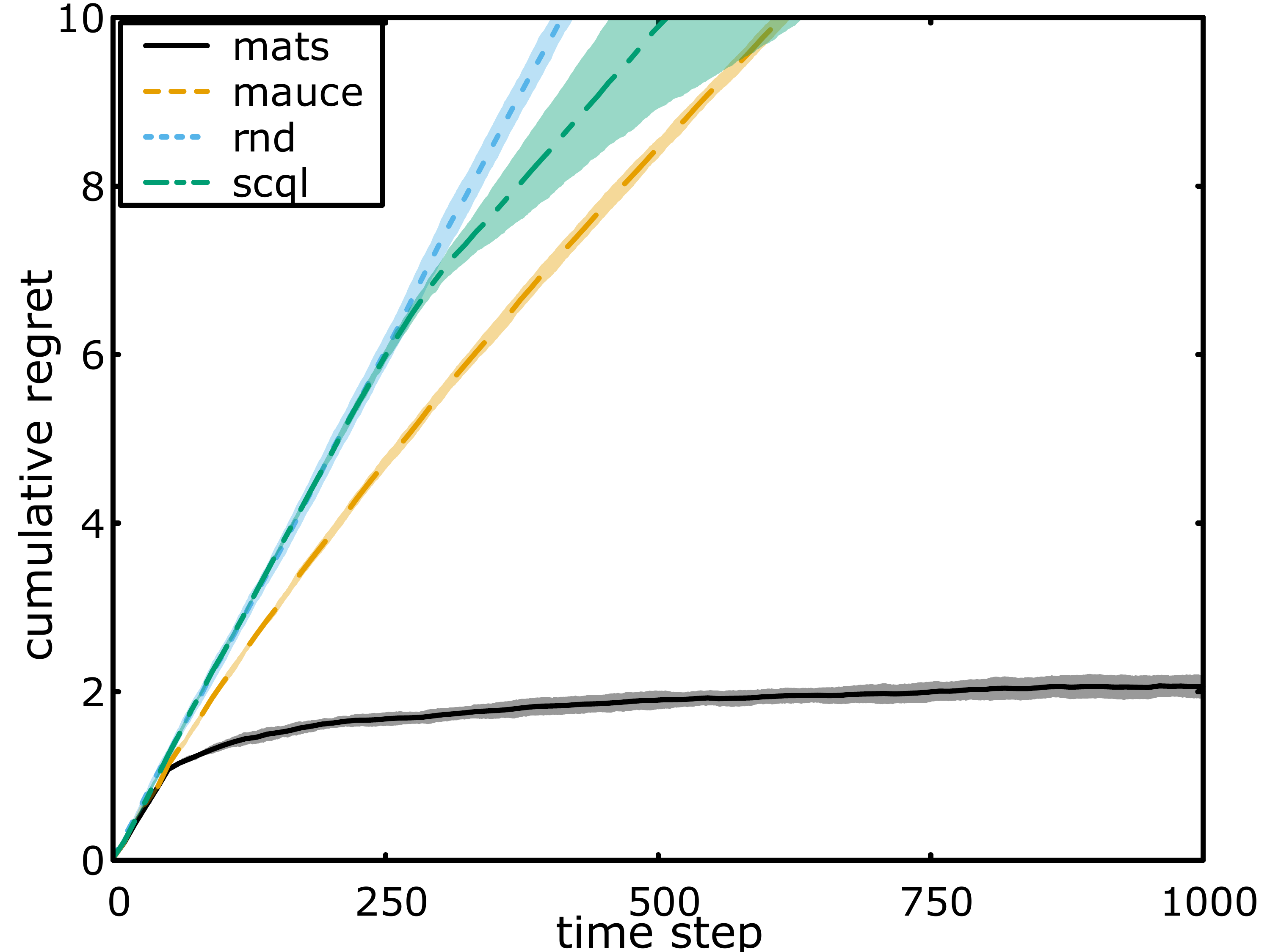}}
\caption{Wind farm layout -- Dependency graph where the nodes are the turbines and the edges describe the dependencies between the turbines. The incoming wind is denoted by an arrow.}
\end{figure}

We demonstrate our method on a virtual wind farm, consisting of 11 turbines, of which the layout is shown in Figure~\ref{fig:wind_graph}. We use the state-of-the-art WISDEM FLORIS simulator \cite{floris_2019}.
For MATS, we assume the local power productions are sampled from Gaussians with unknown mean and variance, which leads to a Student's t-distribution on the mean when using a Jeffreys prior \cite{honda2014optimality}. The results for the wind farm control setting are shown in Figure~\ref{fig:wind}.

\section*{Discussion}
\label{sec:discussion}

MATS is a Bayesian method, which means that it can leverage prior knowledge about the data distribution. This property is highly beneficial in many practical applications, e.g., influenza mitigation \cite{libin2018bayesian} and wind farm control \cite{verstraeten2019fleetwide, bfts2019}.

Both MAUCE and MATS achieve sub-linear regret in terms of time and low-order polynomial regret in terms of the number of local arms for sparse coordination graphs. However, empirically, MATS consistently outperforms MAUCE as well as SCQL. 
We can see that MATS solves the Bernoulli 0101-Chain problem in only a few time steps, while MAUCE still pulls many sub-optimal actions after 10000 time steps (see Figure~\ref{fig:nodes}). In the more challenging Gem Mining problem, the cumulative regret of MAUCE is three times as high as the cumulative regret of MATS around 40000 time steps (see Figure~\ref{fig:mines}). In the wind farm control task, we can see that MATS allowed for a five-fold increase of the normalized power productions with respect to the state of the art (see Figure~\ref{fig:wind}).
We argue that the high performance of MATS is due to the ability to seamlessly include domain knowledge about the shape of the reward distributions and treat the problem parameters as unknowns. To highlight the power of this property, we introduced the Poisson 0101-chain. In this setting, the reward distributions are highly skewed, for which the mean does not match the median. Therefore, in our case, since the mean falls well above 50\% of all samples, it is expected that for the initially observed rewards, the true mean will be higher than the sample mean. Naturally, this bias averages out in the limit, but may have a large impact during the early exploration stage. The high standard deviations in Figure~\ref{fig:nodesp} support this impact. 
Although the established regret bounds of MATS and MAUCE do not apply for supergaussian reward distributions, we demonstrate that MATS exploits density information of the rewards to achieve more targeted exploration. In Figure~\ref{fig:nodesp}, the cumulative regret of MATS stagnates around 7500 time steps, while the cumulative regret of MAUCE continues to increase significantly. As MAUCE only supports symmetric exploration bounds, it is challenging to correctly assess the amount of exploration needed to solve the task. 

Throughout the experiments, exploration constants had to be specified for MAUCE, which were challenging to choose and interpret in terms of the density of the data. In contrast, MATS uses either statistics about the data (if available) or, potentially non-informative, beliefs defined by the user. For example, in the wind farm case, the spread of the data is unknown. MATS effectively maintains a posterior on the variance and uses it to balance exploration and exploitation, while still outperforming MAUCE with a manually calibrated exploration range (see Figure~\ref{fig:wind}).

\section*{Related work}
\label{sec:related_work}

Multi-agent reinforcement learning and planning with loose couplings has been investigated in sequential decision problems \cite{guestrin2002context,KokVlassisRobo,DeHauwere2010,scharpff2016solving}. In sequential settings, the value function cannot be factorized exactly. Therefore, it is challenging to provide convergence and optimality guarantees. While for planning some theoretical guarantees can be provided \cite{scharpff2016solving}, in the learning literature the focus has been on empirical validation \cite{KokVlassisRobo}. In this work, we focus on MAMABs, which are single-shot stateless problems. In such settings, the reward function is factored exactly into components that only depend on a subset of agents.

The combinatorial bandit \cite{bubeck2012regret,cesa2012combinatorial,gai2012combinatorial,chen2013combinatorial} is a variant of the multi-armed bandit, in which, rather than one-dimensional arms, an arm vector has to be pulled. In our work, the arms' dimensionality corresponds to the number of agents in our system, and similarly to combinatorial bandits, the number of arms exponentially increases with this quantity. We consider a variant of this framework, called the semi-bandit problem \cite{audibert2011minimax}, in which local components of the global reward are observable. Chen et. al (2013) constructed an algorithm for this setting that assumes access to an $(\alpha, \beta)$-oracle, which provides a joint action that outputs a fraction $\alpha$ of the optimal expected reward with probability $\beta$. Instead, we assume the availability of a coordination graph, which we argue is a reasonable assumption in many multi-agent settings.

Sparse cooperative Q-learning is an algorithm that also assumes the availability of a coordination graph \cite{kok2004scql}. However, although strong experimental results are given, no theoretical guarantees were provided. Later, the UCB-like algorithm, HEIST, for exploration and exploitation in MAMABs was introduced \cite{stranders2012dcops}, which uses a message-passing scheme for resolving coordination graphs. They provide some theoretical guarantees on the regret for problems with acyclic coordination graphs. Multi-Agent Upper-Confidence Exploration (MAUCE) \cite{bargiacchi2018learning} is a more general method that uses variable elimination to resolve (potentially cyclic) coordination graphs. MAUCE demonstrates high performance on a variety of benchmarks and provides a tight theoretical upper bound on the regret. MATS provides a Bayesian alternative to MAUCE based on Thompson sampling (TS).

Our problem definition is related to distributed constraint optimization (DCOP) problems \cite{yokoo1998distributed}. In DCOP problems, multiple agents control a set of variables in a distributed manner under a set of constraints. The objective is the same as for a MAMAB, i.e., optimize the sum over group rewards. However, in DCOPs, the rewards are assumed to be known beforehand. The Distributed Coordination of Exploration and Exploitation (DCEE) framework \cite{taylor2011distributed} extends this setting to unknown rewards, but considers the optimization of the cumulative reward achieved over a time span, rather than of a single-step reward. MAMABs, or MAB-DCOPs \cite{stranders2012dcops}, consider the optimization of a single-step expected reward over time.

In recent research on wind farm control, the impact of optimized rotor alignments on power production is heavily investigated \cite{vandijk2016}. To search for the optimal alignments within the wind farm, data-driven methods are usually adopted, where the turbines' alignments are perturbed iteratively until they locally converge \cite{marden2013model}. When optimizing the alignment of a wind turbine, only considering its neighbours can significantly boost the learning speed \cite{gebraad2015maximum}. MATS is also able to leverage neighbourhood structures. In addition, rather than random perturbation of the alignments, MATS leverages an exploration-exploitation mechanism that is inspired by TS and variable elimination, which allows for a global exploration mechanism that targets the optimal alignment configuration, while retaining a small regret during the learning process itself.


\section*{Conclusions}
\label{sec:conclusions}

We proposed multi-agent Thompson sampling (MATS), a novel Bayesian algorithm for multi-agent multi-armed bandits. The method exploits loose connections between agents to solve multi-agent coordination tasks efficiently. Specifically, we proved that, for $\sigma$-subgaussian rewards with bounded means, the expected cumulative regret decreases sub-linearly in time and low-order polynomially in the highest number of actions of a single agent when the coordination graph is sparse. Empirically, we showed a significant improvement over the state-of-the-art algorithm, MAUCE, on several synthetic benchmarks. Additionally, we showed that MATS can seamlessly be adapted to the available prior knowledge, and achieves state-of-the-art performance on the Poisson 0101-Chain, a new benchmark with supergaussian rewards. Finally, we demonstrated that MATS achieves high performance on a realistic wind farm control task, where the optimal rotor alignments of the wind turbines need to be jointly optimized to maximize the farm's power production. In many practical applications, there exist sparse neighbourhood structures between agents, and we have shown that MATS is able to successfully exploit these structures, while leveraging prior knowledge about the data. 

\bibliographystyle{abbrv}  
\bibliography{refs}  

\section*{Acknowledgments}
The authors would like to acknowledge FWO (Fonds Wetenschappelijk Onderzoek) for their support through the SB grants of Timothy Verstraeten (\#1S47617N), Eugenio Bargiacchi (\#1SA2820N) and Pieter JK Libin (\#1S31916N). Diederik M Roijers was a Postdoctoral Fellow with the FWO (grant \#12J0617N). This research was supported by funding from the Flemish Government under the ``Onderzoeksprogramma Artifici\"ele Intelligentie (AI) Vlaanderen'' programme and under the VLAIO Supersized 4.0 ICON project.

\end{document}